\documentclass[letterpaper, 10pt, conference]{ieeeconf}

\IEEEoverridecommandlockouts 

\overrideIEEEmargins

\usepackage{enumerate}
\usepackage{graphicx}
\usepackage{amsmath, amssymb}%,amsthm}

\usepackage{framed}
\usepackage{url}

\newtheorem{lemma}{Lemma}

\newtheorem{problem}{Problem}

\usepackage{ifthen,version}
\newboolean{include-notes}
% Comment out the following line to exclude the notes.
\setboolean{include-notes}{true}
%notes
\newcommand{\tcnote}[1]{\ifthenelse{\boolean{include-notes}}%
 {\textbf{TC says: #1}}{}}

\newboolean{show-removed}
\setboolean{show-removed}{false}
\usepackage{color}
\newcommand{\removed}[1]{\ifthenelse{\boolean{show-removed}}%
  {{\color{red} #1}}{}}

\newboolean{show-proofs}
\setboolean{show-proofs}{true}
\newcommand{\showproof}[1]{\ifthenelse{\boolean{show-proofs}}%
  {#1}{}}

\DeclareMathAlphabet{\mathpzc}{OT1}{pzc}{m}{it}
\DeclareFontFamily{U}{msb}{}
\DeclareFontShape{U}{msb}{m}{n}{ <5> <6> <7> <8> <9> gen * msbm
<10> <10.95> <12> <14.4> <17.28> <20.74> <24.88> msbm10}{}
\DeclareSymbolFont{AMSb}{U}{msb}{m}{n}
\DeclareMathSymbol{\Reals}{\mathalpha}{AMSb}{'122}
\DeclareMathSymbol{\Naturals}{\mathalpha}{AMSb}{'116}
\DeclareMathSymbol{\Knumbers}{\mathalpha}{AMSb}{'113}
\DeclareMathSymbol{\Rationals}{\mathalpha}{AMSb}{'121}
\DeclareSymbolFont{AMSb}{U}{msb}{m}{n}
\DeclareMathSymbol{\setB}{\mathalpha}{AMSb}{'102}
\DeclareMathSymbol{\setC}{\mathalpha}{AMSb}{'103}
\DeclareMathSymbol{\setD}{\mathalpha}{AMSb}{'104}
\DeclareMathSymbol{\setE}{\mathalpha}{AMSb}{'105}
\DeclareMathSymbol{\setF}{\mathalpha}{AMSb}{'106}
\DeclareMathSymbol{\setI}{\mathalpha}{AMSb}{'111}
\DeclareMathSymbol{\setK}{\mathalpha}{AMSb}{'113}
\DeclareMathSymbol{\setM}{\mathalpha}{AMSb}{'115}
\DeclareMathSymbol{\setN}{\mathalpha}{AMSb}{'116}
\DeclareMathSymbol{\setP}{\mathalpha}{AMSb}{'120}
\DeclareMathSymbol{\setQ}{\mathalpha}{AMSb}{'121}
\DeclareMathSymbol{\setR}{\mathalpha}{AMSb}{'122}
\DeclareMathSymbol{\setS}{\mathalpha}{AMSb}{'123}
\DeclareMathSymbol{\setT}{\mathalpha}{AMSb}{'124}
\DeclareMathSymbol{\setU}{\mathalpha}{AMSb}{'125}
\DeclareMathSymbol{\setV}{\mathalpha}{AMSb}{'126}
\DeclareMathSymbol{\setW}{\mathalpha}{AMSb}{'127}
%\DeclareMathSymbol{\setX}{\mathalpha}{AMSb}{'128}
%\DeclareMathSymbol{\setY}{\mathalpha}{AMSb}{'129}
\DeclareMathSymbol{\setX}{\mathalpha}{AMSb}{'130}
\DeclareMathSymbol{\setY}{\mathalpha}{AMSb}{'131}
\DeclareMathSymbol{\setZ}{\mathalpha}{AMSb}{'132}

\title{Optimal Parameter Identification for Discrete Mechanical Systems with Application to Flexible Object Manipulation}

\author{T. M. Caldwell and  D. Coleman and N. Correll% <-this % stops a space
\thanks{
This work was supported by a NASA
Early Career Faculty fellowship NNX12AQ47GS02. We are grateful for this support.}%
\thanks{T. M. Caldwell D. Coleman and N. Correll are with the Department of Computer Science, University of Colorado, 1111 Engineering Dr, Boulder, CO 80309, USA {\tt\small E-mail: caldwelt@colorado.edu ; david.t.coleman@colorado.edu ; nikolaus.correll@colorado.edu}}%
}

\begin{document}
\maketitle

\begin{abstract}
We present a method for system identification of flexible objects by measuring forces and displacement during interaction with a manipulating arm. We model the object's structure and flexibility by a chain of rigid bodies connected by torsional springs. Unlike previous work, the proposed optimal control approach using variational integrators allows identification of closed loops, which include the robot arm itself. This allows using the resulting models for planning in configuration space of the robot. In order to solve the resulting problem efficiently, we develop a novel method for fast discrete-time adjoint-based gradient calculation. The feasibility of the approach is demonstrated using full physics simulation in trep and using data recorded from a 7-DOF series elastic robot arm.
\end{abstract}

\section{Introduction}
% 1. What do you want to do
The goal of this work is to use a robotic arm to identify the behavior of a flexible object through touch only. This is an important step toward manipulation of flexible objects such as rubber tubes, plants or clothes \cite{wakamatsu2006knotting,saha2007manipulation,bell2010flexible,jimenez2012survey}.  
% 2. How is it done now and what are the limitations of current practice
There are many methods to model and simulate flexible objects \cite{khalil_payeur, lang_etal}.  A common approach is to model the object as a lattice or collection of links of masses and springs \cite{sahari_etal, wakamatsu_etal, khalil_payeur}.  This approach has been used to simulate linear object like strings, hair, and electrical cables for which the model is a series of masses linked together with springs. %%% MISSING: LIMITATIONS OF THIS APPROACH  

% 3. What are you doing that is new and why do you think it will be successful
Our approach is similar with the primary difference that the loop connects back onto itself.  This connection restricts the loops movement and is handled using holonomic constraints.  To the best of our knowledge, no one has used a robot to identify parameters of a flexible loop for manipulation.

We are using a flexible loop as a running example throughout the paper.  We assume the robot has already rigidly grasped the object at one end and that it is clamped at the other.  The robot then bends, twists, and stretches the loop.  During the manipulation, the robot measures the arm's joint torques and joint angles.  With this information, it is possible to back out mechanical properties of the loop in order to generate an accurate model for future control and manipulation.  In this paper, the manner in which we model the loop is so that the underlying mechanics of the loop are the same as the robot arm, i.e., a collection of rigid bodies connected by springs, allowing us to utilize the vast theory of rigid body mechanics \cite{murray_li_sastry}. Also,  this enables planning and control to be done in the combined arm and loop configuration space instead of the end effector or object space.  We then use an optimal control approach for calculating model properties that best match the behavior of the physical loop.

%We approximate the loop as a ring of $n_l$ links.  Adjacent links connect together through spherical joints with torsional springs. We assume each link is rigid and as such the vast theory of rigid body mechanics is relevant \cite{murray_li_sastry}.  Furthermore, since the robot arm is a series of joined rigid links---albeit with actuated joints---the full system, robot arm, and loop can be modeled together.  As such, planning and control can be done in the configuration space instead of the end effector or object space. 

\begin{figure}
\centering
\includegraphics[width = 180pt]{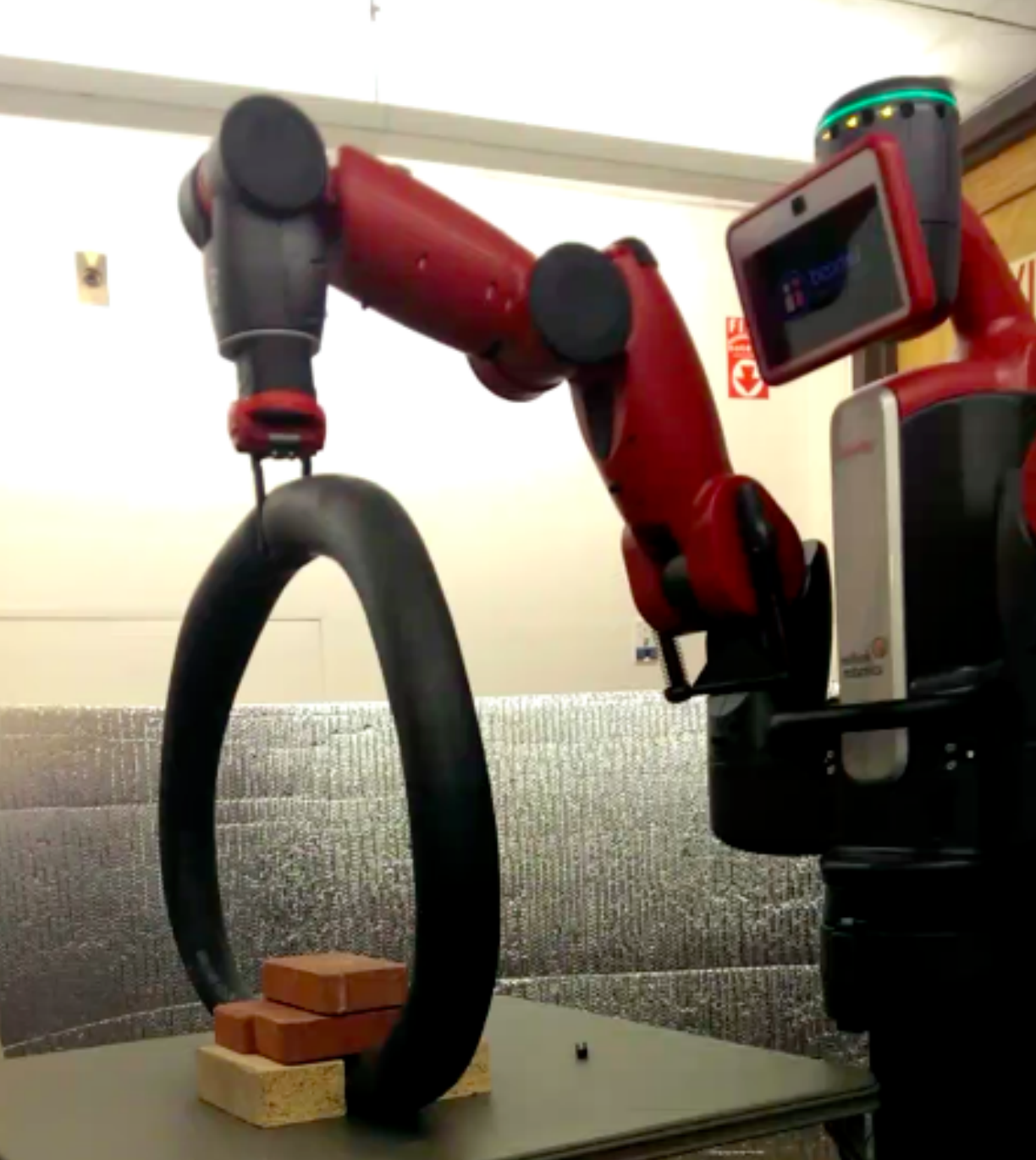}
\caption{Rethink Robotics' Baxter manipulating an inflated bicycle tyre.}
\label{fig-baxter_image_1}
\end{figure}

% How do you determine success?
We use Rethink Robotics' Baxter \cite{guizzo2011rethink} robot to both manipulate and measure the loop.  Each of Baxter's arms have 7 degrees of freedom.  The arms are designed for compliance since each joint has series elastic actuators that allows for force sensing and control.  A picture of Baxter manipulating rubber loop is in Figure~\ref{fig-baxter_image_1}.

A good representation of a flexible object is not enough to accurately model it.  We also need the model simulation to be consistent with the physical behavior of the loop.  We decided not to apply Euler integration or another low order integrator to the continuous dynamics, as is the case in \cite{sahari_etal}, because such integrators can introduce significant energy errors.  At worst these errors will destabilize the integration and at best compromise the model's energy dissipation \cite{johnson_murphey_scalable}.  Instead we decided to use \emph{variational integrators}.  Variational integrators can be used to describe discrete-time equations of motion of a mechanical system.  They are designed from the least action principle and have good properties that agree with known physical phenomenon like stable energy behavior \cite{pekarek_murphey}.

Furthermore, variational integrators elegantly handle holonomic constraints.  Holonomic constraints are specified as $h(q) = 0$, where $q$ is the system's configuration.  They are used to constrain positions and orientations.  For the example, we use holonomic constraints to ``close the loop''---i.e. to constrain the link at one end of the loop to the other end.  In simulating continuous dynamics, holonomic constraints are commonly handled with equivalent constraints on the acceleration that can creep due to numerical integration error.  In comparison, variational integrators apply holonomic constraints directly and do not have this issue (see \cite{johnson_murphey_scalable} for a discussion).

Due to recent work by Johnson and Murphey \cite{johnson_murphey_scalable, johnson_murphey_linearization}, it is possible to efficiently simulate mechanical systems using variational integrators in generalized coordinates. They provide a framework using a tree representation and caching that not only makes for efficient simulation---especially for articulated rigid bodies---but also efficient model-based calculations like linearizations about a trajectory.   In optimal controls, linearizations are needed for gradient calculations like the gradient calculation presented in this paper. 

With a model and simulation to predict the motion of a flexible object, unknown model quantities, referred to as parameters, can be identified.  These model parameters may be lengths, masses, or damping coefficients.  For the example, we assume the loop's stiffness is unknown.  As such, the goal is to obtain the spring constants of the torsional springs at the loop model's joints.  In this paper, this is done using an optimal control approach.

The parameter identification optimization problem is set up as a discrete-time Bolza problem.  For the loop example, the cost functional is a summation of the error between the simulated end effector position and the experimentally-measured end effector position.  The joint error directly correlates to the difference between the simulated loop displacement and the measured loop displacement, at least at the manipulator.  Alternatively, the cost can be given by a maximum likelihood estimate for which the cost is lower for parameters that correspond to the simulation that is most consistent with the measurement (see \cite{houska_etal}).  

\subsection{Contribution of this paper}
The paper's contributions are twofold.  First, it provides a discrete-time adjoint-based gradient calculation for optimal parameter estimation.  Second, it formulates model parameter identification of flexible objects with variational integrators for greater confidence in the simulations' consistency with known physical properties.

\subsection{Organization of this paper}
This paper is organized as follows: Section \ref{sec-sys} reviews continuous and discrete mechanical systems, discussing variational integrators and providing linearization calculations.  Section \ref{sec-opt} discusses the parameter identification optimization problem as well as provides an adjoint-based gradient calculation.  Section \ref{sec-experiment} provides the details of the physical experiment using Rethink Robotics' Baxter, in addition to specifics on the loop model and simulation.  Moreover, it applies the parameter identification approach to estimate stiffness properties of the rubber loop model.

\section{Mechanical Systems}
\label{sec-sys}
In order to identify model parameters of a mechanical system, we must first be able to specify equations for simulation as well as model based calculations like linearizations.  This section reviews continuous and discrete time mechanical systems, where the latter is with respect to variational integrators.  A continuous mechanical system depends on its system Lagrangian which is the difference of its kinetic and potential energies.  From the Lagrangian, the system's equations of motion can be derived from the Euler Lagrange or similar equations depending on whether the system's movement is constrained or is externally forced.  The discrete mechanical system's equations of motion are similarly derived.  For the experiment, these equations predict how the Baxter arm moves as well as how it manipulates and deforms the loop. 

The mechanical system depends on $n_\rho$ system parameters from the parameter space $\mathcal{P}\in\setR^m$. The mechanical system has $n_q$ generalized coordinates $q\in\setR^{n_q}$.  For continuous time, $q$ varies with the continuous variable $t$\textemdash e.g. $q(t)$\textemdash for $t$ in the interval $[0,t_f]$, $t_f>0$.  Likewise, for the discrete representation, $q$ varies with the discrete variable $k$\textemdash e.g. $q_k$\textemdash for $k$ in the set $\mathcal{K}:=\{0,1,\ldots,k_f\}$, $k_f>0$.  Each discrete time $k$ pairs with a continuous time, labeled $t_k$, where $t_0 = 0$, $t_{k_f} = t_f$ and $t_k<t_{k+1}$. This section presents the continuous and discrete dynamics dependent on the parameters $\rho\in\mathcal{P}$.  The continuous dynamics are provided for comparison with the discrete dynamics, in which the paper results are given.

\subsection{Continuous Mechanical System}
We will first review continuous mechanical systems for reference with discrete mechanical systems.  A mechanical system's evolution is given by the path of least action.  The system's action is 
\[
S = \int_0^{t_f}L(q(\tau),\dot{q}(\tau),\rho)d\tau
\]
where $L(q,\dot{q},\rho) := KE(q,\dot{q},\rho) - V(q,\rho)$, the system Lagrangian, is the difference of the system's kinetic energy, $KE$, with its potential energy, $V$.  For this paper, we assume that both kinetic and potential energies depend on system parameters $\rho\in\mathcal{P}$.  Possible parameters of $L$ include lengths, spring constants, and masses. 

As is common in mechanical systems, we wish to include external forces, $F_c(q,\dot{q},\rho,t)$.  This term is the total external forcing in generalized coordinates, which we also assume depends on the parameters $\rho\in\mathcal{P}$, such as damping coefficients.  By including the additional external force term $F_c$ to the action, the Lagrange d'Alembert principle finds that the continuous dynamics of the mechanical system are given by the forced Euler-Lagrange equations \cite{murray_li_sastry}:
\[
\frac{d}{d t}\frac{\partial}{\partial \dot{q}}L(q,\dot{q},\rho) - \frac{\partial}{\partial q}L(q,\dot{q},\rho) = F_c(q,\dot{q},\rho,t).
\]
Holonomic constraints can be enforced as external forces using Lagrange multipliers.  The $n_h$ holonomic constraint equations are given in the form $h(q,\rho)= [h_1,\ldots,h_{n_h}]^T(q,\rho) = 0$. These constraints are important as they allow for describing closed loops like the flexible loop in the example.

With the addition of the constraint, the forced Euler-Lagrange equations are \cite{murray_li_sastry}:
\begin{equation}
\begin{array}{c}
\frac{d}{d t}\frac{\partial}{\partial \dot{q}}L(q,\dot{q},\rho) - \frac{\partial}{\partial q}L(q,\dot{q},\rho) = F_c(q,\dot{q},\rho,t) + \frac{\partial}{\partial q}h^T(q,\rho)\lambda\\
\frac{\partial^2}{\partial q^2}h(q,\rho)\circ(\dot{q},\dot{q}) + \frac{\partial}{\partial q}h(q,\rho)\ddot{q} = 0,
\end{array}
\label{eq-cont_EL}
\end{equation}
where $\lambda(t)\in\setR^{n_h}$ are Lagrange multipliers.  For fixed parameters $\rho$, the system's evolution is solved from Eq.(\ref{eq-cont_EL}) for $q$, $\dot{q}$ and $\lambda$. % It is worth noting that holonomic constraints are not enforced directly but instead an equivalent constraint is placed on the acceleration.  During integration, numerical error can violate the constraint such that  $h(q,\rho)\neq 0$, depending on the type of integrator.

Equation \ref{eq-cont_EL} can be transformed into first-order state space equations.  Define the continuous state as $x = [q,\dot{q}]^T$.  The state equations, dependent on the parameters $\rho\in\mathcal{P}$, are $\dot{x}(t) = f(x(t),\lambda(t),\rho,t)$ where $\ddot{q}$ is specified by the constrained, forced Euler-Lagrange equations.  In the state space representation, gradient and Hessian calculations with respect to parameters are given for parameter optimization in \cite{miller_murphey}.

\subsection{Discrete Mechanical System}
The discrete mechanical system is an approximation of its continuous counterpart.  For an initial configuration $q(0)$ and velocity $\dot{q}(0)$, the continuous configuration $q([0,t_f])$ is integrated from the forced Euler-Lagrange equations, Eq.(\ref{eq-cont_EL}).  The discrete analog to the forced Euler-Lagrange equations instead calculates the sequence $q_k\approx q(t_k)$ using a variational integrator approach \cite{johnson_murphey_scalable}.  

The discrete Lagrangian, labeled $L_d$, is an approximation of the action over a short time interval.  Instead of a velocity term, the discrete Lagrangian is defined by the current and next configurations, $q_k$ and $q_{k+1}$:
\begin{equation}
L_d(q_k,q_{k+1},\rho) \approx \int_{t_k}^{t_{k+1}}L(q(\tau),\dot{q}(\tau),\rho)d\tau.
\label{eq-Ld}
\end{equation}
The integration can be approximated with a quadrature like midpoint or trapezoidal rules.  Refer to \cite{johnson_murphey_scalable} for details of using midpoint rule, which we use in the example.  

Similarly, external forcing is included by approximating $F_c$ with the discrete left and right forces $F_d^-(q_k,q_{k+1},\rho,t_k,t_{k+1})$ and $F_d^+(q_k,q_{k+1},\rho,t_k,t_{k+1})$ using a quadrature.  In addition, the $n_h$ holonomic constraints $h(q_k,\rho)$ can be enforced with the $n_h$ Lagrange multipliers $\lambda_k$.  Given the discrete Lagrangian, discrete forces, and holonomic constraints the forced discrete Euler-Lagrange equations are \cite{johnson_murphey_scalable}:\footnote{The notation $D_i$ is the slot derivative of the $i^\textrm{th}$ argument.  For example, $D_2 L_d(q_{k-1},q_{k},\rho)$ is the partial of $L_d$ with respect to the second slot, $q_{k}$.}
\begin{equation}
\begin{array}{l}
D_2 L_d(q_{k-1},q_{k},\rho) + F_d^+(q_{k-1},q_{k},\rho,t_{k-1},t_{k}) \\\hspace{20pt}+ D_1L_d(q_k,q_{k+1},\rho) + F_d^-(q_k,q_{k+1},\rho,t_k,t_{k+1}) \\\hspace{20pt} - D_1h^T(q_k,\rho)\lambda_k= 0 \\
\hspace{0pt}h(q_{k+1},\rho) = 0.
\end{array}
\label{eq-disc_EL}
\end{equation}
These equations should be viewed as an implicit function on $q_{k+1}$.  For example, given consecutive configurations $q_0$ and $q_1$, the next configuration $q_2$ is found with a root solving operation on Eq.(\ref{eq-disc_EL}). Following, $q_3$ is obtained from $q_1$ and $q_2$ and so forth.  Whereas the continuous mechanical system is solved using integration, the discrete mechanical system is solved through recursive calls to root finding Eq.(\ref{eq-disc_EL}). 

As in \cite{johnson_murphey_scalable}, define $p_k$ as 
\begin{equation}
p_k := D_2 L_d(q_{k-1},q_{k},\rho) + F_d^+(q_{k-1},q_{k},\rho,t_{k-1},t_{k}).
\label{eq-pk}
\end{equation}
Without external forcing, $p_k$ is the conserved momentum.  The discrete state is $x_k := [q_k,p_k]^T$ which has a one-step mapping:
\begin{equation}
\begin{array}{l}
x_{k+1} = f(x_k,\rho,t_k):=\\\left\{\begin{array}{l}
p_k + D_1L_d(q_k,q_{k+1},\rho) + F_d^-(q_k,q_{k+1},\rho,t_k,t_{k+1}) \\\hspace{20pt} - D_1h^T(q_k,\rho)\lambda_k= 0 \\
h(q_{k+1},\rho) = 0\\
p_{k+1} = D_2 L_d(q_{k},q_{k+1},\rho) + F_d^+(q_{k},q_{k+1},\rho,t_{k},t_{k+1}).
\end{array}\right.
\end{array}
\label{eq-fk}
\end{equation}
This equation is the state equation for discrete mechanical systems using variational integrators.  The function $f(x_k,\rho,t_k)$ is implicit, but, according to the Implicit Function Theorem, it exists when
\[
\begin{array}{l}
M_{k+1}:=D_2D_1L_d(q_k,q_{k+1},\rho) \\\hspace{50pt}+ D_2F_d^-(q_k,q_{k+1},\rho,t_k,t_{k+1})
\end{array}
\]
is nonsingular.  By assuming nonsingular, even though $f$ is implicit, the linearization around $x_{k+1}$\textemdash i.e. $\frac{\partial x_{k+1}}{\partial x_k}$\textemdash is explicit.  Letting $dx_k = [dq_k,dp_k]^T$ be the differential of $x_k$ and $da$ be the differential of $\rho$, the linearization of $f(x_k,\rho,t_k)$ is:
\begin{equation}
\left[\begin{array}{cc}dq_{k+1} \\ dp_{k+1} \end{array}\right]
 = \underbrace{\left[\begin{array}{cc}
\frac{\partial q_{k+1}}{\partial q_k} & \frac{\partial q_{k+1}}{\partial p_k} \\
\frac{\partial p_{k+1}}{\partial q_k} & \frac{\partial p_{k+1}}{\partial p_k} 
\end{array}\right]}_{A_k}
\left[\begin{array}{cc}dq_{k} \\ dp_k \end{array}\right] + \underbrace{\left[\begin{array}{cc}\frac{\partial q_{k+1}}{\partial a} \\ \frac{\partial p_{k+1}}{\partial a} \end{array}\right]}_{B_k}da
\label{eq-lin_fk}
\end{equation}
The calculations for the linearization term $A_k$ is given in \cite{johnson_murphey_linearization} and duplicated here for reference:
\begin{subequations}
\label{eq-A}
\begin{equation}
\begin{array}{l}
\frac{\partial q_{k+1}}{\partial q_k} =\\\hspace{0pt}  -M_{k+1}^{-1}[D_1^2L_d(q_k,q_{k+1},\rho) + D_1F_d^-(q_k,q_{k+1},\rho,t_k,t_{k+1}) \\\hspace{0pt} -D_1^2h^T(q_k,\rho)\lambda_k - D_1h^T(q_k,\rho)\frac{\partial \lambda_k}{\partial q_k}]
\end{array}
\label{eq-A11}
\end{equation}
\begin{equation}
\frac{\partial q_{k+1}}{\partial p_k} = -M_{k+1}^{-1}
\label{eq-A12}
\end{equation}
\begin{equation}
\begin{array}{l}
\frac{\partial p_{k+1}}{\partial q_k} =\\\hspace{0pt} [D_2^2L_d(q_k,q_{k+1},\rho) + D_2F_d^+(q_k,q_{k+1},\rho,t_k,t_{k+1})]\frac{\partial q_{k+1}}{\partial q_k} \\\hspace{0pt}+ D_1D_2L_d(q_k,q_{k+1},\rho) + D_1F_d^+(q_k,q_{k+1},\rho,t_k,t_{k+1})
\end{array}
\label{eq-A21}
\end{equation}
\begin{equation}
\begin{array}{l}
\frac{\partial p_{k+1}}{\partial p_k} =  \\\hspace{0pt}[D_2^2L_d(q_k,q_{k+1},\rho) + D_2F_d^+(q_k,q_{k+1},\rho,t_k,t_{k+1})]\frac{\partial q_{k+1}}{\partial p_k}
\end{array}
\label{eq-A22}
\end{equation}
\end{subequations}
where $\frac{\partial \lambda}{\partial q_k}$ can be found in \cite{johnson_murphey_linearization}.  Notice the calculations for Eqs.\ (\ref{eq-A21}) and (\ref{eq-A22}) rely on the calculations for Eqs.\ (\ref{eq-A11}) and (\ref{eq-A12}) respectively.  The $B_k$ term is given by chain rule:
\begin{subequations}
\label{eq-B}
\begin{equation}
\begin{array}{l}
\frac{\partial q_{k+1}}{\partial a} = \frac{\partial q_{k+1}}{\partial p_k}\frac{\partial p_{k}}{\partial a} + \frac{\partial q_{k+1}}{\partial q_k}\frac{\partial q_{k}}{\partial a} + M_{k+1}^{-1}[ D_1D_2h^T(q_k,\rho)\lambda_k  \\\hspace{0pt}  - D_3D_1L_d(q_k,q_{k+1},\rho) - D_3F_d^-(q_k,q_{k+1},\rho,t_k,t_{k+1}) ]
%-M_{k+1}^{-1} \Big[\frac{\partial p_{k}}{\partial a} + \\\hspace{0pt}[D_1^2L_d(q_k,q_{k+1},\rho) + D_1F_d^-(q_k,q_{k+1},\rho,t_k,t_{k+1})]\frac{\partial q_k}{\partial a} \\\hspace{0pt}+ D_3D_1L_d(q_k,q_{k+1},\rho) + D_3F_d^-(q_k,q_{k+1},\rho,t_k,t_{k+1}) \Big] 
\end{array}
\label{eq-B11}
\end{equation}
\begin{equation}
\begin{array}{l}
\frac{\partial p_{k+1}}{\partial a} =\\\hspace{0pt} [D_2^2L_d(q_k,q_{k+1},\rho) + D_2F_d^+(q_k,q_{k+1},\rho,t_k,t_{k+1})]\frac{\partial q_{k+1}}{\partial a} \\\hspace{0pt}+ [D_1D_2L_d(q_k,q_{k+1},\rho) + D_1F_d^+(q_k,q_{k+1},\rho,t_k,t_{k+1})]\frac{\partial q_{k}}{\partial a}  \\\hspace{0pt}+ D_3D_2L_d(q_k,q_{k+1},\rho) + D_3F_d^+(q_k,q_{k+1},\rho,t_k,t_{k+1})
\end{array}
\label{eq-B21}
\end{equation}
\end{subequations}
%where 
%\[
%\begin{array}{l}
%C_k = \frac{\partial q_{k+1}}{\partial p_k}\frac{\partial p_{k}}{\partial a} + \frac{\partial q_{k+1}}{\partial q_k}\frac{\partial q_{k}}{\partial a} \\\hspace{0pt}- M_{k+1}^{-1}[D_3D_1L_d(q_k,q_{k+1},\rho) + D_3F_d^-(q_k,q_{k+1},\rho,t_k,t_{k+1}) \\\hspace{0pt} - D_1D_2h^T(q_k,\rho)\lambda_k] 
%\end{array}
%\]
The term $B_k$ depends on $A_k$ and the previous term $B_{k-1}$. 

We use the equations of motion of discrete mechanical system, Eq.(\ref{eq-fk}), as well as its linearization, Eqs.(\ref{eq-A}) and (\ref{eq-B}) for simulation and calculating the parameter identification gradient for optimal parameter identification.

% In order to obtain the term $\frac{\partial \lambda_k}{\partial a}$, start with $\frac{\partial}{\partial a}[h(q_{k+1},\rho) = 0]$ which is
%\[
%D_1h(q_{k+1},\rho)\frac{\partial q_{k+1}}{\partial a} + D_2h(q_{k+1},\rho) = 0.
%\]
%Plug in for $\frac{\partial q_{k+1}}{\partial a}$ from Eq.(\ref{eq-B11}) and solve for $\frac{\partial \lambda_k}{\partial a}$.  Doing so results in:
%\[
%\begin{array}{l}
%\frac{\partial \lambda_k}{\partial a} = [D_1h(q_{k+1},\rho)D_1h^T(q_k,\rho)]^{-1}\\
%\hspace{20pt}\cdot[D_1h(q_{k+1},\rho)C_k + D_2h(q_{k+1},\rho)].
%\end{array}
%\]

\section{Parameter Optimization}
\label{sec-opt}
The goal of parameter optimization is to calculate the model parameters $\rho\in\mathcal{P}$ that minimize a cost functional.  For the continuous problem, the cost functional is the integral of a running cost $\ell(x(t),\rho)$ plus a terminal cost $m(x(t_f),\rho)$:
\begin{problem}[Continuous System Parameter Optimization]
Calculate the parameters $\rho\in\mathcal{P}$ which solves:
\[
\min_{\rho\in\mathcal{P}} \Big[J(\rho):=\int_0^{t_f}\ell(x(t),\rho)dt + m(x(t_f),\rho)\Big]
\]
constrained to $\dot{x}(t) = f(x(t),\rho,t)$.
\end{problem}
The gradient and Hessian of the continuous cost is given in \cite{miller_murphey}

For the discrete problem, it is reasonable to choose a discrete cost functional that approximates the continuous cost\textemdash i.e. $\ell_d(x_{k},\rho)\approx\int_{t_{k-1}}^{t_{k}}\ell(x(\tau),\rho)d\tau$ and $m_d(x_{k_f},\rho)\approx m(x(t_f),\rho)$.  Alternatively, $\ell_d$ and $m_d$ can be designed directly without first choosing an underlying continuous cost.  The discrete parameter optimization problem is as follows:
%The integration could be approximated with a quadrature like midpoint rule.  Additionally, in order to use the one-step mapping Eq.(\ref{eq-fk}) and its linerization, Eqs.(\ref{eq-A}) and (\ref{eq-B}), we transform the pair $(q_{k-1},q_k)$ to $(q_k,p_k)=:x_k$ using Eq.(\ref{eq-pk}) so that the discrete running cost instead depends on the state\textemdash i.e. $\ell_d(x_k,\rho)$.  In a similar manner, the discrete terminal cost is chosen to approximate continuous terminal cost\textemdash i.e. $m_d(x_{k_f},\rho)\approx m(x(t_f),\rho)$.  The discrete parameter optimization problem is as follows:
\begin{problem}[Discrete System Parameter Optimization]
Calculate the parameters $\rho\in\mathcal{P}$ which solves:
\[
\min_{\rho\in\mathcal{P}} \Big[J_d(\rho):=\sum_{k=1}^{k_f}\ell_d(x_k,\rho) + m_d(x_{k_f},\rho)\Big]
\]
constrained to $x_{k+1} = f(x_k,\rho,t_k)$, Eq.(\ref{eq-fk}).
\label{prob-disc}
\end{problem}

In optimal control theory, it is common practice to solve optimization problems using iterative methods.  Iterative optimization methods repeatedly reduce the cost by stepping in a descending direction until a local optimum is found.  Commonly, the step direction and step size is calculated using local derivative information \cite{armijo, kelley}, which is practiced in this paper.  In the next section, we provide an adjoint-based calculation for the gradient of the cost functional with respect to the parameters.

\subsection{Discrete System Parameter Gradient}
The gradient of the cost functional given in problem \ref{prob-disc} is provided in the following lemma.
\begin{lemma}
\label{lem-grad_a}
Suppose $L_d(q_k,q_{k+1},\rho)$, $F_d^-(q_k,q_{k+1},\rho,t_k,t_{k+1})$, $F_d^+(q_k,q_{k+1},\rho,t_k,t_{k+1})$, and $h(q_k,\rho)$ are $\mathcal{C}^2$ with respect to $q_k$, $q_{k+1}$ and $\rho$.  Take $A_k$ and $B_k$ from Eq.(\ref{eq-lin_fk}) and assume $M_k$ is always nonsingular.  Then,\footnote{The notation $D$ is a slot derivative for a function of only one argument.  For example, $DJ_d(\rho)$ is the partial derivative of $J_d$ with respect to $\rho$.}
\begin{equation}
DJ_d(\rho) = \sum_{k = 1}^{k_f}\lambda_kB_{k-1} +D_2\ell_d(x_k,\rho) + D_2m_d(x_{k_f},\rho)
\label{eq-DJa}
\end{equation}
where $\lambda_k$ is the solution to the backward one-step mapping
\begin{equation}
\lambda_k = \lambda_{k+1}A_{k} + D_1\ell_d(x_{k},\rho) 
\label{eq-lambda}
\end{equation}
starting from $\lambda_{k_f} = D_1\ell(x_{k_f},\rho) + D_1m_d(x_{k_f},\rho)$.  
\end{lemma}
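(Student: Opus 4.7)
The plan is to derive the formula via the standard discrete adjoint (Lagrange multiplier) method, treating the dynamics constraint $x_{k+1} = f(x_k,\rho,t_k)$ as an equality constraint and using the pre-computed linearizations $A_k, B_k$ from Eq.(\ref{eq-lin_fk}) to describe how perturbations propagate. The hypotheses ($\mathcal{C}^2$ regularity and nonsingularity of $M_k$) guarantee that the implicit map $f$ is well-defined and differentiable, so $A_k$ and $B_k$ exist along the trajectory and all of the expansions below are legitimate.

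First I would introduce adjoint covectors $\lambda_1,\ldots,\lambda_{k_f}$ and form the augmented cost
\[
\mathcal{L}(\rho) = \sum_{k=1}^{k_f}\ell_d(x_k,\rho) + m_d(x_{k_f},\rho) + \sum_{k=0}^{k_f-1}\lambda_{k+1}\bigl[f(x_k,\rho,t_k) - x_{k+1}\bigr].
\]
Since the bracketed term vanishes on any trajectory satisfying Eq.(\ref{eq-fk}), we have $\mathcal{L}(\rho) = J_d(\rho)$ identically, so $D\mathcal{L}(\rho) = DJ_d(\rho)$ regardless of how the $\lambda_k$ are chosen. The point of introducing the $\lambda_k$ is that we are free to pick them later to cancel the inconvenient dependencies on $dx_k/d\rho$.

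Next I would differentiate $\mathcal{L}$ in $\rho$, using $D_1 f = A_k$, $D_2 f = B_k$, and collecting terms by the sensitivities $dx_k/d\rho$. For the interior indices $1\le k\le k_f-1$ the coefficient of $dx_k/d\rho$ is
\[
D_1\ell_d(x_k,\rho) + \lambda_{k+1}A_k - \lambda_k,
\]
while for $k=k_f$ the coefficient is $D_1\ell_d(x_{k_f},\rho) + D_1 m_d(x_{k_f},\rho) - \lambda_{k_f}$ (after a clean index shift in the telescoping $-\lambda_{k+1}dx_{k+1}/d\rho$ sum). Forcing both coefficients to vanish yields exactly the terminal condition $\lambda_{k_f} = D_1\ell_d(x_{k_f},\rho) + D_1 m_d(x_{k_f},\rho)$ and the backward recursion $\lambda_k = \lambda_{k+1}A_k + D_1\ell_d(x_k,\rho)$ stated in Eq.(\ref{eq-lambda}).

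With this choice of $\lambda_k$, the remaining explicit-$\rho$ contributions are $\sum_{k=1}^{k_f} D_2\ell_d(x_k,\rho) + D_2 m_d(x_{k_f},\rho)$ together with the $\lambda_{k+1}B_k$ terms from the constraint; re-indexing the latter as $\sum_{k=1}^{k_f}\lambda_k B_{k-1}$ gives Eq.(\ref{eq-DJa}). (Any boundary contribution at $k=0$ vanishes since $x_0$ is the fixed initial state, so $dx_0/d\rho = 0$.) The main subtlety to be careful about is the bookkeeping in the telescoping sum when shifting indices between $\sum_{k=0}^{k_f-1}\lambda_{k+1}(\cdot)$ and $\sum_{k=1}^{k_f}\lambda_k(\cdot)$, and ensuring the transversality term at $k=k_f$ correctly absorbs both $D_1\ell_d$ and $D_1 m_d$; the rest is mechanical once the adjoint substitution is made.
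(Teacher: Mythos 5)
Your proof is correct, but it follows a genuinely different route from the paper's. The paper argues \emph{forward}: it writes the directional derivative $DJ_d(\rho)\theta$, solves the linearized state equation $z_{k+1}=A_kz_k+B_k\theta$ explicitly via the discrete state transition matrix $\Phi(k_2,k_1)=A_{k_2-1}\cdots A_{k_1}$, substitutes $z_k=\sum_{s=0}^{k-1}\Phi(k,s+1)B_s\theta$ into the cost derivative, swaps the order of the resulting double sum, and \emph{defines} $\lambda_{s+1}$ as the explicit bracketed co-vector $\sum_{k=s+1}^{k_f}D_1\ell_d(x_k,\rho)\Phi(k,s+1)+D_1m_d(x_{k_f},\rho)\Phi(k_f,s+1)$; the backward recursion Eq.~(\ref{eq-lambda}) is then only asserted as the efficient way to evaluate that sum. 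You instead use the Lagrange-multiplier (augmented-cost) form of the adjoint method: you never need $\Phi$ or the summation swap, and the recursion plus the terminal condition $\lambda_{k_f}=D_1\ell_d(x_{k_f},\rho)+D_1m_d(x_{k_f},\rho)$ drop out as the \emph{defining} choice that cancels the coefficients of $dx_k/d\rho$. Your bookkeeping checks out: the interior coefficient $D_1\ell_d(x_k,\rho)+\lambda_{k+1}A_k-\lambda_k$, the $k=k_f$ transversality term, the vanishing $k=0$ contribution from $dx_0/d\rho=0$, and the re-indexing $\sum_{k=0}^{k_f-1}\lambda_{k+1}B_k=\sum_{k=1}^{k_f}\lambda_kB_{k-1}$ all reproduce Eqs.~(\ref{eq-DJa}) and (\ref{eq-lambda}) exactly. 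The trade-off: the paper's derivation yields a closed-form expression for $\lambda_k$ that makes its meaning as an accumulated sensitivity transparent (at the cost of leaving the equivalence with the recursion unverified), whereas yours obtains the recursion and boundary condition directly and handles the endpoint cases more explicitly, at the cost of the multipliers being defined only implicitly.
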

\begin{proof}
The derivative of the cost in the direction $\theta\in\setR^{n_\rho}$ is
\begin{equation}
\begin{array}{l}
DJ_d(\rho)\theta:=\sum_{k=1}^{k_f}D_1\ell_d(x_k,\rho)\frac{\partial x_k}{\partial \theta} + D_2\ell_d(x_k,\rho)\theta \\\hspace{20pt}+ D_1m_d(x_{k_f},\rho)\frac{\partial x_{k_f}}{\partial \theta} + D_2m_d(x_{k_f},\rho)\theta.
\end{array}
\label{eq-DJ_dot_theta}
\end{equation}
Label $z_k:=\frac{\partial x_{k}}{\partial \theta}$ for convenience. Also for convenience, label 
\begin{equation}
H:=\sum_{k=1}^{k_f}D_1\ell_d(x_k,\rho) z_k + D_1m_d(x_{k_f},\rho) z_{k_f}
\label{eq-H}
\end{equation}
The linearized state, $z_k$, is the solution to the linearized state equation, Eq.(\ref{eq-lin_fk}).  In other words,
\[
z_{k+1} = A_kz_k + B_k\theta
\]
starting from $z_0 = 0$.  The linearized state's solution depends on the discrete state transition matrix:
\[
\Phi(k_2,k_1):=\prod_{j = 1}^{k_2-k_1} A_{k_2-j} = A_{k_2-1}A_{k_2-2}\cdots A_{k_1}
\]
for integers $k_2>k_1$ and where $\Phi(k_1,k_1):=I$, the identity matrix.  Recalling $z_0 = 0$, the linearized state's solution is
\[
z_k = \Phi(k,0)z_0 + \sum_{s = 0}^{k-1}\Phi(k,s+1)B_s\theta = \sum_{s = 0}^{k-1}\Phi(k,s+1)B_s\theta
\]
for $k = 1,\ldots,k_f$.  Plugging $z_k$ into $H$, Eq.(\ref{eq-H}), $H$ becomes
\[
\begin{array}{l}
H = \sum_{k=1}^{k_f}D_1\ell_d(x_k,\rho) \sum_{s = 0}^{k-1}\Phi(k,s+1)B_s\theta \\\hspace{20pt}+ D_1m_d(x_{k_f},\rho) \sum_{s = 0}^{k_f-1}\Phi(k_f,s+1)B_s\theta
\\\hspace{10pt} = \sum_{k=1}^{k_f}\sum_{s = 0}^{k-1}D_1\ell_d(x_k,\rho) \Phi(k,s+1) B_s\theta \\\hspace{20pt}+  \sum_{s = 0}^{k_f-1} D_1m_d(x_{k_f},\rho)\Phi(k_f,s+1)B_s\theta
\end{array}
\]
Switch the order of the double sum.  
\[
\begin{array}{l}
H = \sum_{s=0}^{k_f-1}\sum_{k = s+1}^{k_f}D_1\ell_d(x_k,\rho) \Phi(k,s+1) B_s\theta \\\hspace{20pt}+ D_1m_d(x_{k_f},\rho) \sum_{s = 0}^{k_f-1}\Phi(k_f,s+1)B_s\theta
\\\hspace{10pt} = \sum_{s=0}^{k_f-1}\Big[\sum_{k = s+1}^{k_f}D_1\ell_d(x_k,\rho) \Phi(k,s+1) \\\hspace{20pt}+ D_1m_d(x_{k_f},\rho)\Phi(k_f,s+1)\Big]B_s\theta
\end{array}
\]
Set $\lambda_{s+1}$ as the resulting co-vector in the brackets so that 
\[
H = \sum_{s=0}^{k_f-1} \lambda_{s+1} B_s\theta = \sum_{k=1}^{k_f} \lambda_{k} B_{k-1}\theta.
\]
The efficient calculation for $\lambda_k$ is given in Eq.(\ref{eq-lambda}).  Plugging $H$ into Eq.(\ref{eq-DJ_dot_theta}), we find
\[
DJ(\rho)\theta = \Big[\sum_{k = 1}^{k_f}\lambda_kB_{k-1} +D_2\ell_d(x_k,\rho) + D_2m_d(x_{k_f},\rho)\Big]\theta.
\]
\end{proof}

\section{Experiment}
\label{sec-experiment}

%\begin{figure}[!htb]
%\centering
%\includegraphics[width = 220pt]{init_bloop}
%\caption{Illustration of Baxter manipulating a rubber loop approximated by 12 links connected by torsional springs.}
%\label{fig-bloop_experiment}
%\end{figure}

\begin{figure*}
\centering
\def\svgwidth{.97\textwidth}
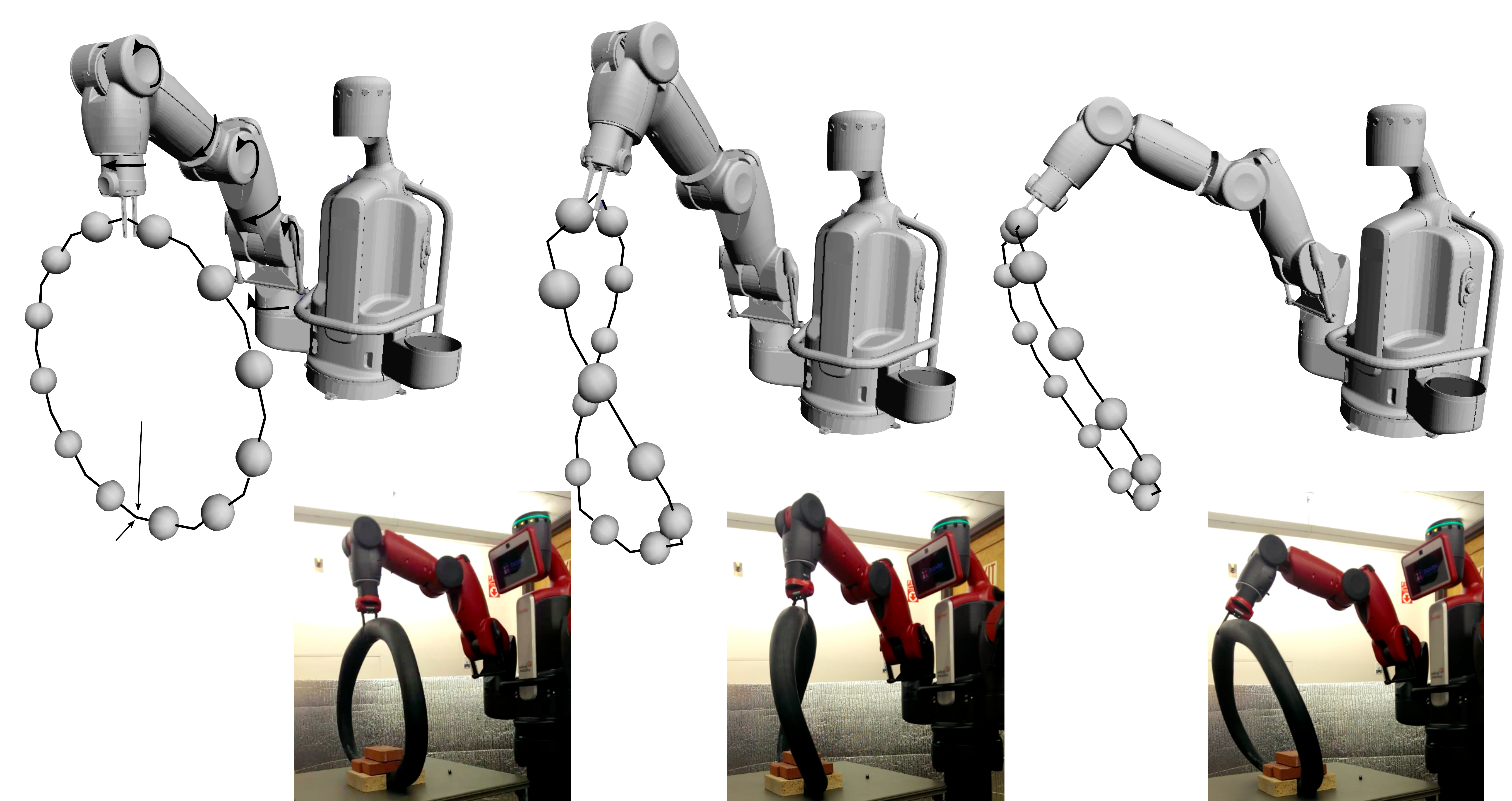
\caption{Three distinct configurations along with snapshots of the physical system.  \textbf{a)} The frames for Baxter and the loop in their initial configuration. \textbf{b)} Baxter ``twisting'' the loop. \textbf{c)} Baxter ``bending'' the loop.}
\label{fig-3bloops}
\end{figure*}

The goal is to identify the model parameters of a flexible loop that best match experimental measurements of a flexible loop with only touch at a single point of contact.  For the measuring and manipulating, we use Rethink Robotics' Baxter robot, shown in Figure~\ref{fig-baxter_image_1}. We simulate this interaction using a discrete model of the loop shown in Figure \ref{fig-3bloops}. The simulated arm joint angles and torques depend on the parameters set for a simulation.  As such, we can find the parameters which correspond to simulated joint angles that best match the measured joint angles.  For this experiment, the unknown parameters describe the stiffness of the loop model.

The tools for simulating Baxter manipulating a loop are given in Section \ref{sec-sys} while the optimization technique for best matching the simulation to the measured data is described in Section \ref{sec-opt}.  The following provides the experimental setup, the model and simulation of Baxter manipulating the loop, and the results of optimally identifying loop parameters.

\subsection{Experimental Setup \label{sec-setup}}

The experiment is set up as follows.  Baxter's arm is positioned similarly to that in Figure~\ref{fig-baxter_image_1}.  The top of an inflated rubber loop (bicycle tyre) is placed in Baxter's gripper and the bottom is clamped to the table.   The loop is initially positioned vertically. The loop has a radius of $0.355$ meters and a mass of $0.132$ kilograms.  The arm is run through a recorded motion that is designed to stretch, bend, and twist the tube. The arm's joint angles and torques are captured every 100Hz, which we compile into the two vectors $b_{meas}(t)$ and $T_{meas}(t)$.  The measured joint angles have a position resolution of +/- 5 mm.

The measured joint angles $b_{meas}$ correspond to a subset of the system configuration $q$, where the remaining configurations belong to the loop.  We label this subset of $q$ that describes the arm as $b$.  The optimization goal is to choose model parameters so that $|b-b_{meas}|$ is small.

\subsection{Model}
We model Baxter and the loop with the same underlying rigid body mechanics.  We model both as a series of rigid links connected by rotational joints.  For Baxter, these joints are 7 series elastic actuators.  The dimensions,  inertia, and other information concerning Baxter's arm can be obtained at \url{https://github.com/RethinkRobotics}.  

\begin{figure*}[!htb]
\centering
\def\svgwidth{.80\textwidth}%500pt}
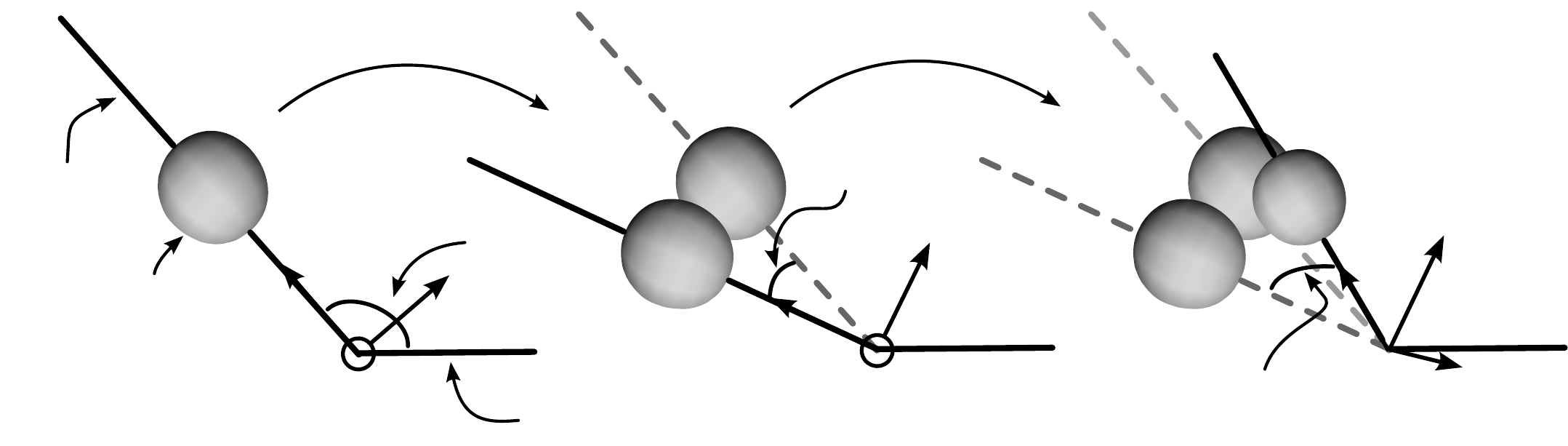
\caption{Illustration of joint $i$ connecting loop links $i$ and $i+1$.  The joint is a rotation of $\theta_i$ radians about the $X$-axis followed by a rotation of $\psi_i$ radians about the $Y$-axis. }
\label{fig-loop_link}
\end{figure*}

We use a discrete model of the loop.  The loop is composed of 12 rigid links wherein the connection between each rigid link is a spherical joint.  A single loop joint is shown in Figure~\ref{fig-loop_link}.  As seen in the figure, the orientation of the new link is given by the angles $\theta_i$ and $\psi_i$.  These rotations are described on a frame that is rotated about the $X-axis$ by $5\pi/6$ radians so that when each $\psi_i = \theta_i = 0$, the loop approximation is a regular dodecagon.  These angles, the $\theta_i$ and $\psi_i$, along with Baxter's joint angles, $b$, make up the system configuration, $q$. Note that the choice of the discretization is arbitrary and only dependent on the required fidelity of the model as well as available computational time. 

We represent Baxter grasping the loop using a tree structure of transformations.  This tree structure is illustrated in Figure~\ref{fig-3bloops}a).  Starting from the world frame $\mathcal{W}$, the arm is specified with successive rotation and translation transformations.  At Baxter's end effector, labeled $H$, the description branches, splitting the loop into two sides, marked with $\ell$ and $r$ for left and right.  The two branches meet at the base of the loop, where the loop is clamped.  For arbitrary Baxter and loop joint angles, $q$, there is no guarantee that the two ends of the loop meet at the clamping location in space.  When they do, we say that the system configuration satisfies the holonomic constraint $h(q) = 0$.  There are 12 total constraints---i.e. $h(q)\in\setR^{12}$.  There are 6 constraints for each side of the loop transformation description in order to constrain the orientation and position of frames $N_\ell$ and $N_r$ to the clamp location.  Figure~\ref{fig-3bloops} shows three distinct configurations $q$ that satisfy the constraints.

We model the stiffness of the loop with torsional springs on each loop configuration variable---i.e. for each $\theta_i$ and $\psi_i$.  Due to the uniformity of the loop, we assume that all of the springs on the $\theta_i$ configuration variables have the same spring constant $\kappa_{\theta}$.  Likewise, each of the springs on $\psi_i$ configuration variables have same spring constant $\kappa_{\psi}$.  The objective is to identify these spring constant and as such we set $\rho = [\kappa_{\theta}, \kappa_{\psi}]^T$, where $\mathcal{P} = \setR^+ \times \setR^+$.  

\subsection{Simulation}
In order to simulate the robot arm and loop, we turn to Section \ref{sec-sys} which reviews variational integrators.  The continuous state is given by $q$ and its time derivatives\textemdash i.e. $x(t) = [q(t),\dot{q}(t)]^T$.  These are the robot and loop joint angles and associated angular velocities.  The system's discrete time state is $x_k = [q_k,p_k]^T$ where $p_k$ is defined in Eq.(\ref{eq-pk}).   The continuous dynamics are given by the constrained, forced Euler Lagrange equations, Eq.(\ref{eq-cont_EL}), which depends on the system Lagrangian $L(q,\dot{q},\rho)$, external forces $F_c(q,\dot{q},\rho,t)$ and holonomic constraints $h(q,\rho)$. Techniques to derive these formulas for rigid bodies are well understood \cite{murray_li_sastry}.  

From the continuous dynamics, it is straightforward to obtain the discrete dynamics, which are given in Eq.(\ref{eq-disc_EL}).  The discrete dynamics depend on the discrete system Lagrangian $L_d(q_{k},q_{k+1},\rho)$, discrete external left and right forces, $F_d^-(q_{k},q_{k+1},\rho,t_k,t_{k+1})$ and $F_d^+(q_{k},q_{k+1},\rho,t_k,t_{k+1})$, and holonomic constraints $h(q_k,\rho)$.  

For simulation, we chose a constant time step of $\Delta_t = 0.01$ seconds, which matches the broadcast frequency of Baxter.  Further, we decided to approximate $L_d$ from $L$ using midpoint rule (see Eq.(\ref{eq-Ld})):
\begin{equation}
L_d(q_k,q_{k+1},\rho) = \Delta_t L(\frac{q_{k+1}+q_k}{2},\frac{q_{k+1}-q_k}{\Delta_t},\rho).
\label{eq-Ld_midpoint}
\end{equation}
Similarly, using midpoint rule, we approximate $F_c$ by $F_d^-$ and $F_d^+$ where
\[
\left\{
\begin{array}{l}
F_d^-(q_k,q_{k+1},\rho,t_k,t_{k+1}) = \\\hspace{70pt}\Delta_t F_c(\frac{q_{k+1}+q_k}{2},\frac{q_{k+1}-q_k}{\Delta_t},\rho,\frac{t_{k+1}+t_k}{2})\\
F_d^+(q_k,q_{k+1},\rho,t_k,t_{k+1}) = 0.
\end{array}
\right.
\]
It is a simple process to translate the continuous system to the discrete variational integrator one. Additionally, in order to simulate the dynamics using the one-step mapping in Eq.(\ref{eq-fk}), we need certain partial derivatives of the Lagrangian, external forces and constraints with respect to configuration variables, which can be found in \cite{johnson_murphey_scalable}.  

The external forces $F_c$ are applied at the configuration variables.  For Baxter manipulating the loop, the only external forces are those applied at the 7 configuration variables that define Baxter's arm, which we have labeled as $b$. Due to model and sensor error, which are always an issue for real systems, it is unreasonable to expect that the simulation will provide meaningful results, let alone be stable by directly setting $F_c = T_{meas}$, the experimentally measured joint torques.  Instead, we decided to filter them using a proportional feedback loop as so:
\[
F_{c}(t) = T_{meas}(t) - K(t) (b(t) - b_{meas}(t)),
\]
where $b_{meas}$ is Baxter's measured joint angles as mentioned in Section \ref{sec-setup} and $K$ is a feedback gain.  When $K$ is chosen correctly, the simulation is stable.  We chose $K$ from a finite horizon LQR to calculate an optimal feedback gain from the model linearized around $b_{meas}$ and a quadratic cost functional \cite{anderson_moore}.

\emph{Aside:}  We used the software tool \texttt{trep} \cite{johnson_murphey_scalable} which simulates articulated rigid bodies using midpoint variational integrators.  It additionally provides partial derivative calculations that we need for the system linearization, Eqs.(\ref{eq-A}) and (\ref{eq-B}).  

\subsection{Linearization}
The linearization of the discrete equations of motion is given by matrices $A_k$ and $B_k$ in Eqs.(\ref{eq-A}) and (\ref{eq-B}).  We need the linearization for the gradient calculation, Lemma \ref{lem-grad_a}, in order to perform gradient-based descent algorithm like steepest descent for parameter identification.  Partial derivatives of $L_d$ and $F^+$ with respect to $q_k$ and $p_k$ can be obtained from \cite{johnson_murphey_linearization}.  In this section, we only concern ourselves with the partials that depend on the paramaters $\rho = [\kappa_\theta,\kappa_\psi]^T$ which are not included in \cite{johnson_murphey_linearization}

For the loop example, we need to calculate $D_3D_1L_d(q_k,q_{k+1},\rho)$ and $D_3D_1L_d(q_k,q_{k+1},\rho)$ for $\rho = [\kappa_\theta, \kappa_\psi]$.  Note that the potential energy of the system can be written as:
\[
V(q,\rho) = V_{\theta}(q,\kappa_{\theta}) + V_{\psi}(q,\kappa_{\psi}) + V_g(q)
\]
where $V_{\theta}$, $V_{\psi}$ and $V_g$ are the potential energies due to the spring torques on the $\theta_i$ configuration variables, the spring torques on the $\psi_i$ configuration variables, and gravity, respectively.  Label $I_\theta$ and $I_\psi$ as the index of the $\theta_i$ and $\psi_i$ configuration variables in $q$ respectively. The potential energy due to the $\theta_i$ torsional springs is $V_{\theta}(q, \kappa_{\theta}) = \sum_{i\in I_\theta}\frac{1}{2}\kappa_{\theta}\theta_i^2$.  Approximating for the discrete time potential energy---see Eq.(\ref{eq-Ld_midpoint})---we find that \footnote{Here, we (poorly) index the $i^\textrm{th}$ term of $q_k$ as $q_{i,k}$.}
\[
V_{\theta,d}(q_k,q_{k+1},\kappa_{\theta}) = \sum_{i \in I_\theta}\frac{\Delta_t}{2}\kappa_{\theta}(\frac{q_{i,k+1} + q_{i,k}}{2})^2.
\]
Taking the needed partial derivatives to calculate $D_3D_1L_d$, the $i^\textrm{th}$ element of $D_3D_1V_{\theta,d}$ is
\[
\begin{array}{l}
D_3D_1 V_{\theta,d}(q_k,q_{k+1},\kappa_{\theta})_i 
= \left\{\begin{array}{ll}
\frac{\Delta_t}{4}(q_{i,k+1} + q_{i,k}),   &     i\in I_\theta \\
0 & \textrm{else}.
\end{array}\right.
\end{array}
\]
The needed partial derivatives of $V_{\psi,d}$ are the same except for the indexes $I_\psi$
%\[
%\begin{array}{l}
%D_3D_1 V_{\psi,d}(q_k,q_{k+1},\kappa_{\psi})_i 
%= \left\{\begin{array}{ll}
%\frac{\Delta_t}{4}(q_{i,k+1} + q_{i,k}),   &     i\in I_\psi \\
%0 & \textrm{else}.
%\end{array}\right.
%\end{array}
%\] 
Since the kinetic energy does not depend on $\rho = [k_\theta, k_\psi]$,
\[
\begin{array}{l}
D_3D_1L_d(q_k,q_{k+1},\rho) = \\\hspace{10pt}- [D_3D_1 V_{\theta,d}(q_k,q_{k+1},\kappa_{\theta}), D_3D_1 V_{\psi,d}(q_k,q_{k+1},\kappa_{\psi})].
\end{array}
\]
Repeating the derivation for $D_3D_2L_d(q_k,q_{k+1},\rho)$ we find that $D_3D_2L_d(q_k,q_{k+1},\rho) = D_3D_1L_d(q_k,q_{k+1},\rho)$

Furthermore, we need to calculate $D_1h(q_k,\rho)$, $D_1^2h(q_k,\rho)$ and $D_1D2h(q_k,\rho)$, the last of which is $0$ since the constraints do not depend on the parameters.  These partial derivatives of $h$ are given simply by chain rule and depend on the first and second partial derivatives of the transformations from the world frame $W$ to frames $N_r$ and $N_\ell$ (refer to Figure~\ref{fig-3bloops} for the frames).  These partial derivatives can be found in \cite{johnson_murphey_linearization}.  

\subsection{Optimal Parameter Identification}
With the linearization of the discrete state equations, Eq.(\ref{eq-fk}), we can calculate the gradient of a cost from Lemma \ref{lem-grad_a} in a steepest descent algorithm to identify the model parameters $\rho = [\kappa_\theta,\kappa_\psi]^T$.  The experiment is set up as in Section \ref{sec-setup}.  We program the experimental Baxter arm to stretch, bend and twist the loop over a 20 second time frame. Snapshots of this manipulation are in Figure~\ref{fig-3bloops}.  The joint torques to displace the loop are recorded in $T_{meas}$ while joint angles are recorded in $b_{meas}$.    For fixed parameters $\rho$, the system can be simulated as discussed in Section \ref{sec-sys} using $T_{meas}$.  We wish to find the simulated Baxter arm joint angles, labelled $b$---recall $b$ are the configuration variables of $q$ that describe the Baxter arm---that best match $b_{meas}$.

The matching is quantified by the cost $J_d$.  We choose $J_d$ to be quadratic on the error from the simulated end effector position in space to the measured end effector position, which can be derived from $b$ and $b_{meas}$ respectively.  For reference, the position of the end effector is at the origin of frame $H$ in Figure~\ref{fig-3bloops}a.  Label $w_k(\rho)$ as this simulated position at time $t_k$ for parameters $\rho$ and $w_{meas}(t_k)$ as this measured position.  Set $\epsilon_k := w_k(\rho)-w_{meas}(t_k)$.  The cost $J_d$ is defined by the running cost $\ell_d(q_k,\rho)$ and the terminal cost $m_d(q_{k_f},\rho)$, set as
\[
\ell_d(q_k,\rho) = \epsilon_k^T\epsilon_k \textrm{ and } m_d(q_{k_f},\rho) = \epsilon_{k_f}^T\epsilon_{k_f}
\]
We perform the optimization using a steepest descent algorithm with the inequality constraint that the parameters are $\kappa_\theta>0$ and $\kappa_\psi>0$.    At each iteration of the descent, an Armijo line search updates the parameters by choosing a distance to step in the direction of the negative gradient.  We used Armijo parameters $\alpha = \beta = 0.4$ \cite{armijo}.

We seed the steepest descent algorithm with an initial guess of $\rho = [5, 5]^T$.  After $75$ iterations, the algorithm terminates with gradient norm $|DJ_d(\rho)| = 0.001138$.  The cost decreased from $J_d = 2.13506$ to $1.106741$.  The parameters are identified as $\rho^\star = [4.45252, 0.96969]^T$.  The convergence is shown in Fig.~\ref{fig-conv}.
Furthermore, a comparison of the simulated end effector's path for each iteration of steepest descent with the Baxter measured end effector's path is in Fig.~\ref{fig-paths}.  In agreement that the cost decreases with each iteration, it appears in the figure that the end effector's path converged toward the measured end effector's path.

For the example, optimally identifying a 12-link  (31 configuration, 62 state) model from 20 seconds of manipulation activity (2000 data points recorded at 100Hz) takes 219.05 minutes on a Macbook air.  For comparison, we also identified a 6-link (19 configuration 38 state) model.  It converged nearly four times quicker, in 61.01 minutes, but the optimal cost was higher at $J_d = 1.54588$.  The design tradeoff is between model fidelity and computation time.
\begin{figure}
\centering
\includegraphics[width = 160pt]{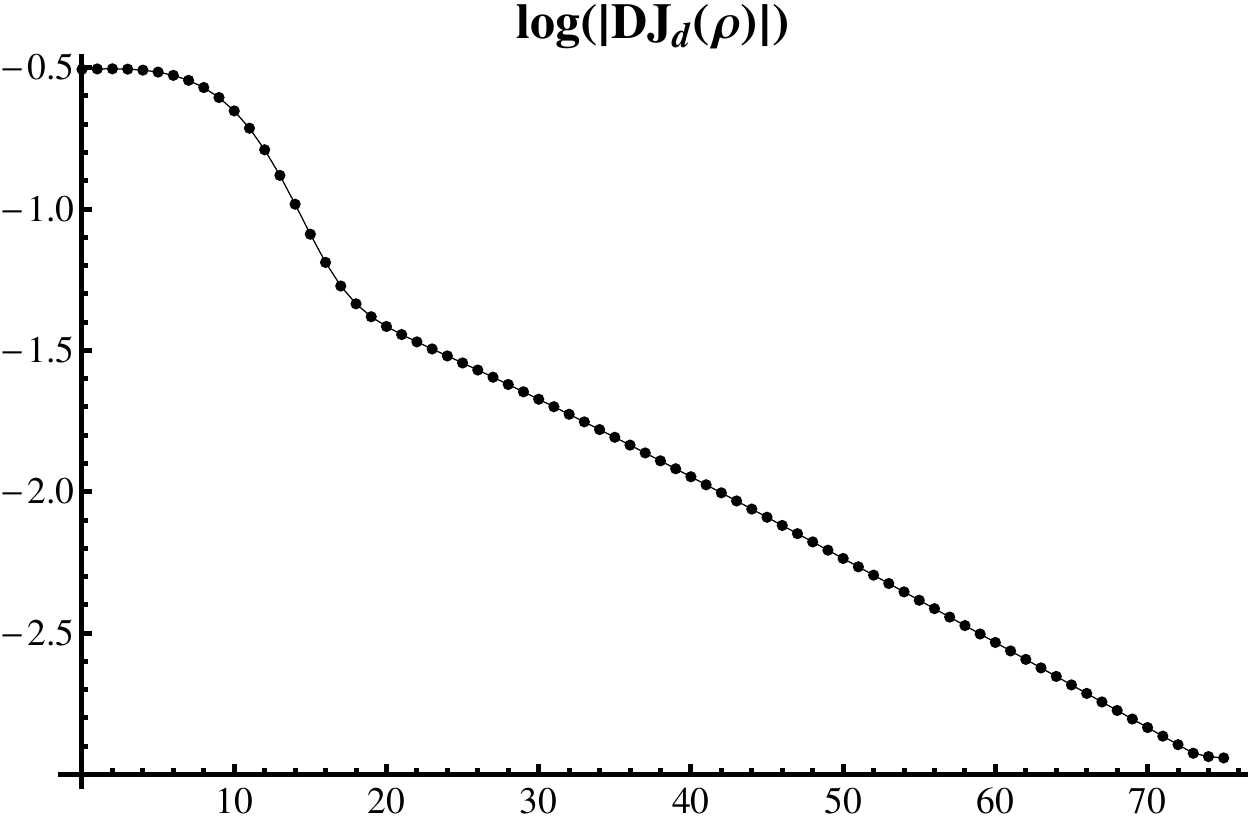}
\caption{Convergence of the optimization algorithm.}
\label{fig-conv}
\end{figure}

\begin{figure}
\centering
\includegraphics[width = 160pt]{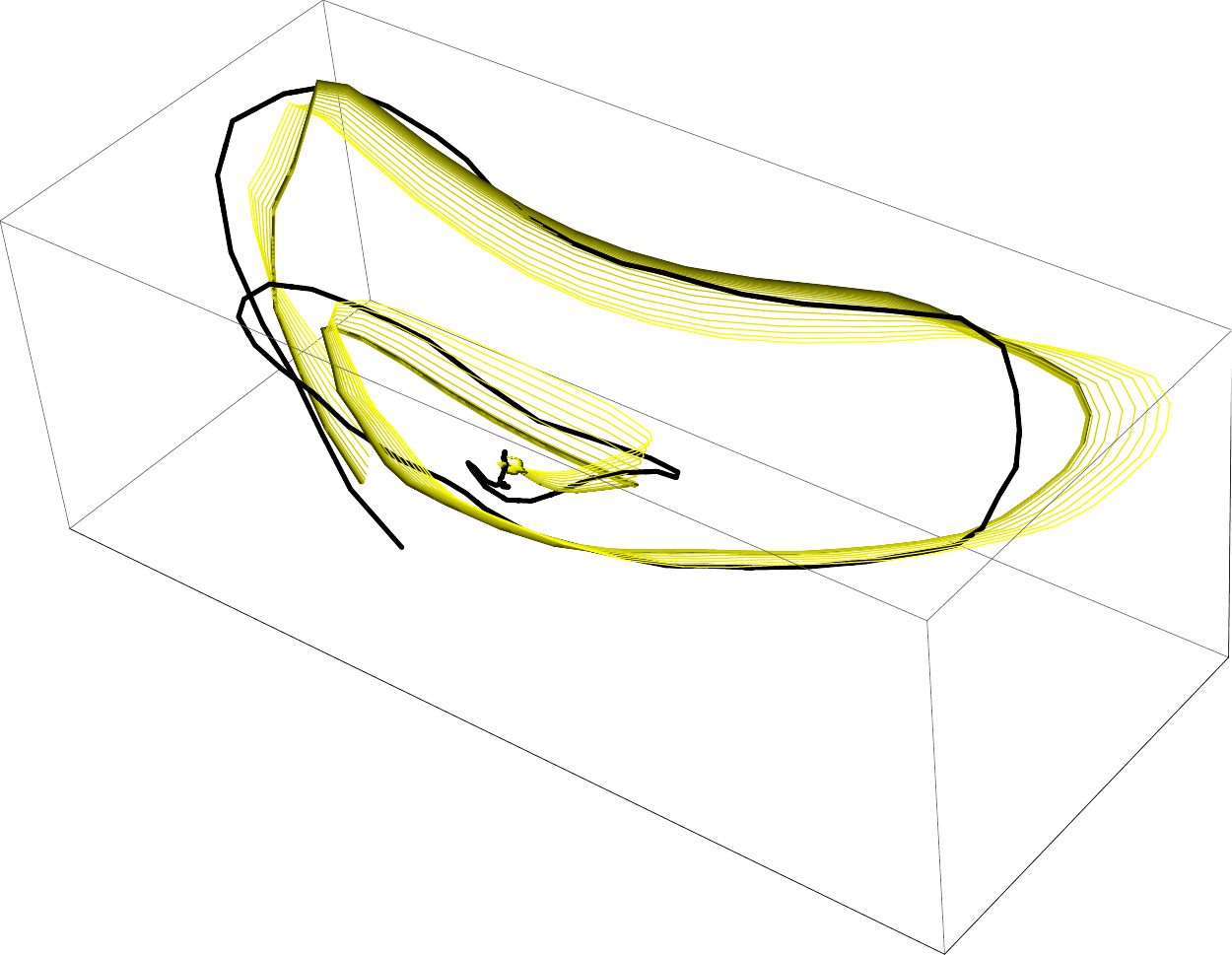}
\caption{The path the robot's end effector took through space.  The black line is the measured end effector's path while the yellow lines are the end effector's simulated path for each iteration of the steepest descent algorithm.  The iteration numbers are ordered from lighter yellow to darker yellow.  }
\label{fig-paths}
\end{figure}

\section{Discussion}
Through the proposed parameter identification procedure, we calculated the model parameters that best match physical phenomena within the constraint of the chosen model.  This process is important since an improved model can make for better object manipulation.  However, it is unclear, especially in the presented experiment, what this matching tells us about the object's physical properties.  Certainly, measuring at a single contact point provides little insight into the interior stress and strain of the rubber loop.

Furthermore, we assumed the loop has uniform stiffness.  Without this assumption, multiple experiments are needed at different contact points to identify the non-uniformity.  Also, objects with more complex geometries require additional experiments.  For instance, grasping and manipulating a single leaf of a tomato plant provides little insight to identify the full plant.  Multiple experiments as well as algorithms to determine appropriate actions is a topic of future work.

As the parameter identification process is informed from actual manipulation, the proposed method could also be used online, thereby improving a model estimate with the time the robot spent with an object growing. We note that identification in this paper relies exclusively on proprioceptive data compared with model prediction. In the future, we plan to explore the following directions: (1) using proprioception of a second arm to validate hypotheses on object movement and obtaining additional measurements and constraints, and (2) combine this data with exterioperception such as cameras, 3D data, and dynamic tactile sensing \cite{hughes2014}. 

Finally, we note that the fidelity of the approach heavily depends on the choice of the model. While this choice strongly depends on the desired manipulation task, we are also interested in automatically finding appropriate model representations for given geometries and conceptual knowledge on the object, such as ``plant'', ``tube'' or ``sheet'', e.g.  

\section{Conclusion}
We provided a method for optimally identifying the model parameters of a flexible object manipulated by a robotic arm and applied it to identifying stiffness characteristics of a flexible loop. The proposed model exclusively relies on proprioception, thereby not requiring any additional sensors. We modeled the loop as a chain of rigid links connected by torsional springs and used variational integrators to simulate it.  The simulation accounted for the chain of links being closed to form the loop. Additionally, since we modeled the loop with the same underlying mechanics as the robotic arm, the full system can be simulated together, allowing for planning in the configuration space.  The feasibility of the approach was demonstrated using the variational integrator simulating software \texttt{trep} and with data recorded from a 7-DOF series elastic robot arm. We have identified open challenges, including using multiple arms and external sensors to increase model performance, as well as identifying not only model parameters, but also the model structure itself. 

\bibliographystyle{plain}
\bibliography{param_opts_cites}

\end{document}